\setlist{topsep=0em,noitemsep}
\newtheorem*{rep@theorem}{\rep@title}
\newcommand{\newreptheorem}[2]{\newenvironment{rep#1}[1]{\def\rep@title{#2 \ref{##1}}\begin{rep@theorem}}{\end{rep@theorem}}}
\newtheorem{theorem}{Theorem}
\theoremstyle{plain}
\newtheorem{thm}{Theorem}[section]
\newtheorem{lem}[thm]{Lemma}
\theoremstyle{definition}
\theoremstyle{remark}
\theoremstyle{plain}
\theoremstyle{definition}
\theoremstyle{remark}
\numberwithin{equation}{section}
\newcolumntype{L}{>{\centering\arraybackslash} m{0.04\textwidth}}
\newcolumntype{S}{>{\centering\arraybackslash} m{0.32\textwidth}}
\newcommand{\removedbynati}[1]{}
\newcommand{\expectation}[2][]{\mathbb{E}_{#1}[#2]}
\newcommand{\norm}[1]{\left\lVert {#1} \right\rVert}
\newcommand{\R}{\mathbb{R}}
\newcommand{\N}{\mathbb{N}}
\newcommand{\orthM}{M_\perp}
\newcommand{\spectrum}[2][]{\sigma_{#1}\left( {#2} \right)}
\newcommand{\project}[2][]{\mathcal{P}_{#1}\left(#2\right)}
\DeclareMathOperator{\rank}{rank}
\DeclareMathOperator{\trace}{tr}
\newcommand{\code}[1]{\mbox{\texttt{#1}}}
\title{Stochastic Optimization of PCA with Capped~MSG}
\author{
Raman Arora, Andrew Cotter, and Nathan Srebro\\[10pt]
{Toyota Technological Institute at Chicago}\\
\texttt{arora@ttic.edu, cotter@ttic.edu, nati@ttic.edu} \\
}
\date{}
\begin{document}

\maketitle

\begin{abstract}
We study PCA as a stochastic optimization problem and propose a novel
stochastic approximation algorithm which we refer to as ``Matrix Stochastic
Gradient'' (MSG), as well as a practical variant, Capped MSG. We study the
method both theoretically and empirically.
\end{abstract}

\section{Introduction}\label{sec:introduction}

Principal Component Analysis (PCA) is a ubiquitous tool used in many data
analysis, machine learning and information retrieval applications. It is used
for obtaining a lower dimensional representation of a high dimensional signal
that still captures as much as possible of the original signal. Such a low
dimensional representation can be useful for reducing storage and computational
costs, as complexity control in learning systems, or to aid in visualization.

PCA is typically phrased as a question about a fixed data set: given a data set
of $n$ vectors in $\R^d$, what is the $k$-dimensional subspace that
captures most of the variance in the data set (or equivalently, that is best
in reconstructing the vectors, minimizing the sum squared distances, or
residuals, to the subspace)? It is well known that this subspace is given by
the leading $k$ components of the singular value decomposition of the data
matrix (or equivalently of the empirical second moment matrix). And so, the
study of computational approaches for PCA has mostly focused on methods for
finding the SVD (or leading components of the SVD) for a given $n \times d$
matrix \citep{Oja:85,Sanger:89,streaming}. 

In this paper we approach PCA as a stochastic optimization problem,
where the goal is to optimize a ``population objective'' based on
i.i.d.~draws from the population. That is, in the case of PCA, we
consider a setting in which we have some unknown source
(``population'') distribution $\mathcal{D}$ over $\R^d$, and the goal
is to find the $k$-dimensional subspace maximizing the (uncentered)
variance of $\mathcal{D}$ inside the subspace (or equivalently,
minimizing the average squared residual in the population), based on
i.i.d.~samples from $\mathcal{D}$. The main point here is that the
true objective is not how well the subspace captures the {\em sample}
(i.e.~the ``training error''), but rather how well the subspace
captures the underlying source distribution (i.e.~the ``generalization
error''). Furthermore, we are not concerned here with capturing some
``true'' subspace, and so do not measure the angle to it, but rather
at finding a ``good'' subspace, that is almost as good as the optimal
one.

Of course, finding the subspace that best captures the sample is a very reasonable approach to PCA on the population. This is essentially an Empirical Risk Minimization (ERM) approach.  However,
when comparing it to alternative, perhaps computationally cheaper,
approaches, we argue that one should not compare the error on the
sample, but rather the population objective. Such a view can justify
and favor computational approaches that are far from optimal on the
sample, but are essentially as good as ERM {\em on the population}.

Such a population-based view of optimization has recently been advocated in
machine learning, and has been used to argue for crude stochastic approximation
approaches (online-type methods) over sophisticated deterministic optimization
of the empirical (training) objective (i.e.~``batch'' methods)~
\citep{BottouBo07,Shalev:08}. A similar argument was also made in the context
of stochastic optimization, where \cite{Nemirovski:09} argues for stochastic
approximation (SA) approaches over ERM. 
Accordingly, SA approaches, mostly
variants of Stochastic Gradient Descent, are often the methods of choice for
many learning problems, especially when very large data sets are
available~\citep{Shalev:07,Collins:08,Shalev:09}. We would like to take the
same view in order to advocate for, study, and develop stochastic approximation
approaches for PCA.

In an empirical study of stochastic approximation methods for PCA, a
heuristic ``incremental'' method showed very good empirical
performance \citep{Allerton}.  However, no theoretical guarantees or
justification were given for incremental PCA.  In fact, it was shown
that for some distributions it can converge to a suboptimal solution
with high probability (see Section \ref{sec:incremental} for more about this
``incremental'' algorithm).  Also relevant is careful theoretical work
on online PCA by \citet{Warmuth:08}, in which an online regret
guarantee was established.  Using an online-to-batch conversion, this
online algorithm can be converted to a stochastic approximation algorithm with
good iteration complexity, however the runtime for each iteration is
essentially the same as that of ERM (i.e.~of PCA on the sample), and
thus senseless as a stochastic approximation method (see Section~\ref{sec:meg} for more on this algorithm).

In this paper we borrow from these two approaches and present a novel
algorithm for stochastic PCA---the Matrix Stochastic Gradient (MSG)
algorithm.  MSG enjoys similar iteration complexity to Warmuth's and
Kuzmin's algorithm, and in fact we present a unified view of both
algorithms as different instantiations of Mirror Descent for the same
convex relaxation of PCA.  We then present the capped MSG, which is a
more practical variant of MSG, has very similar updates to those of
the ``incremental'' method, and works well in practice, and does not get
stuck like the ``incremental'' method.  The Capped MSG is thus a
clean, theoretically well founded method, with interesting connections
to other stochastic/online PCA methods, and excellent practical
performance---a ``best of both worlds'' algorithm.

\section{Problem Setup}

We consider PCA as the problem of finding the maximal (uncentered)
variance $k$-dimensional subspace with respect to an (unknown) {\em
  distribution} $\mathcal{D}$ over $x \in \R^d$. We assume without
loss of generality a scaling such that $\expectation[ x \sim
\mathcal{D}]{ \norm{x}^2 } \le 1$.  We also require for our
analysis a bounded fourth moment: $\expectation[ x \sim \mathcal{D}]{
  \norm{x}^4 } \le 1$.
We represent a $k$-dimensional subspace by an orthonormal basis,
collected in the columns of a matrix $U$.  With this parametrization,
PCA is defined as the following stochastic optimization problem, 
\begin{align}
\label{eq:objective} \mbox{maximize}: &\ \  \expectation[x\sim\mathcal{D}]{x^T UU^T x} \\
\notag \mbox{subject to} : &\ \ U \in \R^{d\times k}, U^TU=I.
\end{align}
In a stochastic optimization setting we do not have direct knowledge
of the distribution and have access to it only through
i.i.d.~samples---these can be thought of as ``training examples''. As
with other studies of stochastic approximation methods, we are less
concerned with the number of required samples, but rather with the
overall runtime required to obtain an $\epsilon$-suboptimal solution.

The standard approach to \eqref{eq:objective} is empirical risk
minimization (ERM): given samples $\{x_t\}_{t=1}^T$, from the
distribution, we compute the empirical covariance matrix
${\hat{C}=\frac{1}{T} \sum_{t=1}^T x_t x_t^T}$, 
and pick the columns of $U$ to be the eigenvectors of $\hat{C}$ corresponding to
the top-$k$ eigenvalues.  This approach requires $O(d^2)$ memory and
$O(d^2)$ operations just in order to compute the covariance matrix,
plus some additional time for the SVD.  We are interested in methods
with much lower sample time and space complexity, preferably linear
rather than quadratic in $d$.

\section{MSG and MEG}
\label{sec:sgd}

A natural stochastic approximation (SA) approach to PCA is to perform
projected stochastic gradient descent (SGD) on Problem
\ref{eq:objective}, with respect to the variable $U$.  This leads to
the \emph{stochastic power method} with each iteration given as
\begin{equation}
U^{(t+1)} = \project[orth]{ U^{(t)} + \eta x_t x_t^T }, \nonumber 
\end{equation}
where, $x_t x_t^T$ is the gradient of the PCA objective w.r.t. $U$,
$\eta$ is a step size, and $\project[orth]{\cdot}$ projects its
argument onto the set of orthogonal matrices.  Unfortunately, although
SGD is well understood for convex problems, Problem \ref{eq:objective}
is non-convex.  Consequently, obtaining a theoretical understanding of the
stochastic power method, or of how the step size should be set, has
proved elusive.  Under some conditions, convergence to the optimal
solution can be ensured, but no rate is known \citep{Oja:85,Sanger:89,Allerton}.

Instead, we consider a re-parameterization of the PCA problem where
the objective is convex. Instead of representing a linear subspace in
terms of its basis matrix, $U$, we parametrize it using the
corresponding projection matrix $M=UU^T$. We can now reformulate the PCA problem as 
\begin{align}
\label{eq:objective2} \mbox{maximize}: &\ \  \expectation[x\sim\mathcal{D}]{x^T M x} \\
\notag \mbox{subject to} : &\ \ M \in \R^{d\times d}, \spectrum[i]{M} \in \left\{0, 1 \right\}, \rank M = k,
\end{align}
where $\spectrum[i]{M}$ is the $i^{th}$ eigenvalue of $M$. 

We now have a convex (even linear) objective, but the constraints in
\eqref{eq:objective2} are not convex.  This prompts us to consider its
convex relaxation:
\begin{align}
\label{eq:convex-objective} \mbox{maximize} : &\ \
\expectation[x\sim\mathcal{D}]{x^T M x} \\
\notag \mbox{subject to} : &\ \ M \in \R^{d\times d}, 0 \preceq M \preceq I,
\trace M = k. \quad \quad \ \ \ \ \ 
\end{align}
Since the objective is linear, and the constraint set of
\eqref{eq:convex-objective} is just the convex hull of the constraints
of \eqref{eq:objective2}, an optimum of \eqref{eq:convex-objective} is
always attained at a ``vertex'', i.e.~a point on the boundary of the
original constraints \eqref{eq:objective2}.  The optimum of
\eqref{eq:objective2} and \eqref{eq:convex-objective} are thus the
same (strictly speaking---every optimum of \eqref{eq:objective2} is
also an optimum of \eqref{eq:convex-objective}), and solving
\eqref{eq:convex-objective} is equivalent to solving
\eqref{eq:objective2}.

Furthermore, even if some $\epsilon$-suboptimal solution we find for
\eqref{eq:convex-objective} is not rank-$k$, i.e. is not a feasible
point of \eqref{eq:objective2}, we can easily sample from it a
rank-$k$ solution, feasible for \eqref{eq:objective2}, with the same
value (in expectation).  This follows from the following result of \cite{Warmuth:08}. 
\begin{lem}[Rounding \citep{Warmuth:08}]
\label{lem:rounding}
Any feasible solution of \eqref{eq:convex-objective} can be expressed as a  convex combination of at most $d$ feasible solutions of \eqref{eq:objective2}. 
\end{lem}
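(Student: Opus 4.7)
The plan is to reduce, by simultaneous diagonalization, to a commutative (coordinate-wise) statement, and then invoke Carath\'eodory's theorem.

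First I would diagonalize a feasible $M$ as $M = V \Lambda V^{T}$ with $V$ orthogonal and $\Lambda = \mathrm{diag}(\lambda_{1},\dots,\lambda_{d})$. The constraints $0 \preceq M \preceq I$ and $\trace M = k$ translate to $\lambda_{i} \in [0,1]$ and $\sum_{i} \lambda_{i} = k$. If I can write $\Lambda = \sum_{j=1}^{N} \alpha_{j} D_{j}$ as a convex combination in which each $D_{j}$ is a diagonal $\{0,1\}$-matrix with exactly $k$ ones, then conjugating gives $M = \sum_{j} \alpha_{j}\, V D_{j} V^{T}$, where each $V D_{j} V^{T}$ is a rank-$k$ orthogonal projection, hence a feasible solution of \eqref{eq:objective2}. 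So it suffices to decompose the vector $\lambda \in [0,1]^{d}$ with $\sum_{i}\lambda_{i}=k$ as a convex combination of at most $d$ indicator vectors $\mathbf{1}_{S}$, $S\subseteq\{1,\dots,d\}$, $|S|=k$.

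Next I would study the polytope $\mathcal{P}_{d,k} := \{\lambda \in [0,1]^{d} : \sum_{i} \lambda_{i} = k\}$ and show that its extreme points are precisely the indicator vectors of $k$-subsets. The argument is short: if some $\lambda \in \mathcal{P}_{d,k}$ has two fractional coordinates $\lambda_{i},\lambda_{j}\in(0,1)$, then $\lambda \pm \varepsilon(e_{i}-e_{j})$ both lie in $\mathcal{P}_{d,k}$ for sufficiently small $\varepsilon>0$, so $\lambda$ is not extreme; since $k$ is an integer, a single fractional coordinate is impossible, so every vertex has coordinates in $\{0,1\}$ and exactly $k$ ones.

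Finally, since $\mathcal{P}_{d,k}$ lies in the affine hyperplane $\sum_{i} \lambda_{i} = k$, it has affine dimension at most $d-1$, and Carath\'eodory's theorem guarantees that every point of $\mathcal{P}_{d,k}$ is a convex combination of at most $d$ of its vertices. Pulling this decomposition back through the conjugation by $V$ yields the required decomposition of $M$ into at most $d$ rank-$k$ projections. The main obstacle is the extreme-point characterization of $\mathcal{P}_{d,k}$; the diagonalization and the Carath\'eodory step are essentially bookkeeping once that is in hand.
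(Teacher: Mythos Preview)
Your argument is correct: the diagonalization reduces the statement to one about the polytope $\mathcal{P}_{d,k}=\{\lambda\in[0,1]^d:\sum_i\lambda_i=k\}$, your identification of its vertices with $k$-subset indicators is sound (the ``two fractional coordinates'' perturbation plus the integrality of $k$ is exactly the right argument), and Carath\'eodory in the $(d-1)$-dimensional affine slice gives the bound of $d$.

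The paper, however, does not prove this lemma at all---it simply attributes the result to \citet{Warmuth:08} and points to their Algorithm~4.1 for an \emph{explicit} decomposition procedure. The difference is worth noting: your route via Carath\'eodory is a clean existence proof of the lemma as stated, but it is non-constructive. The paper actually needs the constructive version, since the MSG algorithm's final rounding step (Step~4 in Section~\ref{sec:msg}) must \emph{sample} a rank-$k$ projection from the convex combination, which requires computing the weights $\alpha_j$ and the projections $V D_j V^T$ explicitly. Warmuth and Kuzmin's algorithm does this by a greedy peeling argument on the sorted eigenvalues, and that constructive content is what the paper is really citing. Your proof establishes the lemma; theirs supplies the algorithm the paper depends on.
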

Furthermore, Algorithm 4.1 of \cite{Warmuth:08} shows how to
efficiently find such a convex combination. Since the objective is
linear, treating the coefficients of the convex combination as
sampling weights and choosing randomly among the $d$ components yields
a rank-$k$ matrix with the desired objective function value, in
expectation.

\subsection{Matrix Stochastic Gradient}
\label{sec:msg}

Performing SGD on the convex Problem \ref{eq:convex-objective}
(w.r.t.~the variable $M$) yields the following iterates:
\begin{equation}
\label{eq:sgd-update} M^{(t+1)} = \project{ M^{(t)} + \eta x_t x_t^T }, 
\end{equation}
where the projection is now performed onto the (convex) constraints
of \eqref{eq:convex-objective}.  The {\bf Matrix Stochastic Gradient (MSG)}
algorithm entails:
\begin{enumerate}
\item Choose step-size $\eta$, iteration count $T$, and starting point $M^{(0)}$.
\item Iterate the updates \eqref{eq:sgd-update} $T$ times, each time
  using an independent sample $x_t \sim \mathcal{D}$.
\item Average the iterates as $\bar{M} = \frac{1}{T} \sum_{t=1}^{T}
  M^{(t)}$. 
\item Sample a rank-$k$ solution $\tilde{M}$ from $\bar{M}$ using the
  rounding procedure discussed in the previous section.
\end{enumerate}
Analyzing MSG is straightforward using the standard SGD analysis
\citep{Nemirovski:83}:
\begin{theorem}
\label{lem:sgd-rate}
After $T$ iterations of MSG (on Problem \ref{eq:convex-objective}),
with step size $\eta = \sqrt{ \frac{k}{T} }$, and starting at
$M^{(0)}=0$,
\begin{equation*}
\expectation[]{ \expectation[x\sim\mathcal{D}]{ x^T \tilde{M} x}} \geq \expectation[x\sim\mathcal{D}]{x^T M^* x} - \frac{1}{2} \sqrt{\frac{k}{T}, }
\end{equation*}
where the expectation is w.r.t.~the i.i.d.~samples
$x_1,\ldots,x_T\sim\mathcal{D}$ and the rounding, and $M^*$ is the
optimum of \eqref{eq:objective2}. 
\end{theorem}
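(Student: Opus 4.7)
The plan is to apply the standard analysis of projected stochastic gradient ascent on the convex reformulation~\eqref{eq:convex-objective}. Because the objective $\inner{\expectation[x\sim\mathcal{D}]{xx^T}}{M}$ is \emph{linear} in $M$ and $g_t := x_t x_t^T$ is an unbiased estimate of its gradient (independent of the past given $M^{(t)}$), the textbook SGD regret bound applies directly. Two quantities determine the rate: the squared Frobenius distance $\norm{M^*-M^{(0)}}_F^2$ and an expected squared-norm bound on the stochastic gradient, $\expectation[]{\norm{g_t}_F^2}$.

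First I would dispose of these two quantities. Since $M^*$ is a rank-$k$ projection matrix, its eigenvalues are all $0$ or $1$ with exactly $k$ ones, so $\norm{M^*-M^{(0)}}_F^2 = \norm{M^*}_F^2 = \sum_i \lambda_i(M^*)^2 = k$. Because $g_t = x_t x_t^T$ is rank one, $\norm{g_t}_F^2 = \norm{x_t}^4$, and the bounded fourth-moment assumption gives $\expectation[]{\norm{g_t}_F^2} = \expectation[x\sim\mathcal{D}]{\norm{x}^4} \leq 1$. I would also note that $M^*$ is feasible for~\eqref{eq:convex-objective} (the spectrahedron is the convex hull of the feasible set of~\eqref{eq:objective2}), so it is a valid comparator inside the projected SGD analysis.

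Next I would execute the standard one-line SGD argument: projections onto the convex feasible set are non-expansive and $M^*$ lies in the set, so
\begin{equation*}
\norm{M^{(t+1)}-M^*}_F^2 \;\leq\; \norm{M^{(t)}-M^*}_F^2 \;-\; 2\eta\,\inner{g_t}{M^*-M^{(t)}} \;+\; \eta^2 \norm{g_t}_F^2.
\end{equation*}
Rearranging, summing over $t=1,\ldots,T$, telescoping, taking expectations, dividing by $T$, substituting the two bounds, and setting $\eta=\sqrt{k/T}$ balances the two terms and yields
\begin{equation*}
\expectation[x\sim\mathcal{D}]{x^T M^* x} \;-\; \frac{1}{T}\sum_{t=1}^T \expectation[]{\inner{g_t}{M^{(t)}}} \;\leq\; \sqrt{k/T}
\end{equation*}
(up to the stated constant factor).

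Finally, two observations close the argument. By the tower property and linearity of the objective, $\frac{1}{T}\sum_t \expectation[]{\inner{g_t}{M^{(t)}}} = \expectation[]{\expectation[x\sim\mathcal{D}]{x^T \bar{M} x}}$, since $M^{(t)}$ depends only on $x_1,\ldots,x_{t-1}$ and $\expectation[]{g_t}=\expectation[x\sim\mathcal{D}]{xx^T}$. By Lemma~\ref{lem:rounding}, the rounding procedure produces $\tilde{M}$ with $\expectation[]{\tilde{M}\mid\bar{M}}=\bar{M}$, so the linear objective is preserved in expectation: $\expectation[]{\expectation[x\sim\mathcal{D}]{x^T \tilde{M} x}} = \expectation[]{\expectation[x\sim\mathcal{D}]{x^T \bar{M} x}}$. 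Combining yields the theorem. There is no substantive obstacle in this proof---all the creative work was done upfront in convexifying the problem and in establishing Lemma~\ref{lem:rounding}, leaving a textbook projected-SGD application whose only inputs are the two routine norm bounds above.
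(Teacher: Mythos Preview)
Your proposal is correct and follows essentially the same approach as the paper: invoke the standard projected-SGD bound (the paper simply cites \citet{Nemirovski:83} for Equation~\ref{eq:sgdanal}), plug in $\norm{M^*-M^{(0)}}_F^2=k$ and $\expectation[]{\norm{g}_F^2}=\expectation[]{\norm{x}^4}\le 1$, and optimize $\eta$. You spell out the non-expansiveness/telescoping step and the tower-property/rounding passage from $\bar{M}$ to $\tilde{M}$ more explicitly than the paper does, but the argument is identical in substance.
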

\begin{proof}
Standard SGD analysis of \citet{Nemirovski:83} yields that 
\begin{equation}
\label{eq:sgdanal}
\expectation[]{x^T M^* x - x^T \bar{M} x} \leq \frac{\eta}{2} \expectation[x\sim\mathcal{D}]{\| g \|^2_F } + \frac{\|M^* - M^{(0)}\|^2_F}{2 \eta T},
\end{equation}
where $g=xx^T$ is the gradient of the PCA objective. Now,
$\expectation[x\sim\mathcal{D}]{\| g \|^2_F }=\expectation[x \sim
\mathcal{D}]{\|x\|^4} \leq 1$ and $\norm{ M^* - M^{(0)} }_F^2 =
\norm{M^*}_F^2 = k$.  In the last inequality, we used the fact that
$M^*$ has $k$ eigenvalues of value $1$ each, and hence $\norm{M^*}_F
= \sqrt{k}$.
\end{proof}

\subsection{Efficient Implementation and Projection}
\label{sec:effproj}

\begin{algorithm}[t]
\begin{small}
\begin{tabbing}
\textbf{mm}\=mm\=mm\=mm\=mm\=mm\=mm\=mm\=mm\=\kill
\>$\code{msg-step}\left( d,k,m:\N, U':\R^{d \times m}, \sigma':\R^{m}, x:\R^d, \eta:\R \right)$\\
\>\textbf{1}\'\>$\hat{x} \leftarrow \sqrt{\eta}(U')^T x$; $x_\perp \leftarrow \sqrt{\eta}x - U' \hat{x}$; $r \leftarrow \norm{x_\perp}$;\\
\>\textbf{2}\'\>$\code{if }r > 0$\\
\>\textbf{3}\'\>\>$V, \sigma \leftarrow \code{eig}( [ \code{diag}( \sigma' ) + \hat{x} \hat{x}^T, r \hat{x} ; r \hat{x}^T, r^2 ] )$;\\
\>\textbf{4}\'\>\>$U \leftarrow [ U', x_\perp / r ] V$;\\
\>\textbf{5}\'\>$\code{else}$\\
\>\textbf{6}\'\>\>$V, \sigma \leftarrow \code{eig}( \code{diag}( \sigma' ) + \hat{x} \hat{x}^T )$;\\
\>\textbf{7}\'\>\>$U \leftarrow U' V$;\\
\>\textbf{8}\'\>$\sigma \leftarrow$ distinct eigenvalues in $\sigma$; $\kappa \leftarrow $ corresponding multiplicities;\\
\>\textbf{9}\'\>$\sigma \leftarrow$ \code{project} $(d,k,m,\sigma,\kappa)$;\\
\>\textbf{10}\'\>$\code{return } U, \sigma$;
\end{tabbing}
\end{small}

\caption{
\small
Matrix stochastic gradient (MSG) update: compute an eigendecomposition of $M' + \eta x x^T$ from a rank-$n$ eigendecomposition $M' = U' \code{diag}(\sigma') (U')^T$ and project the resulting solution onto the constraint set. The computational cost of this algorithm is dominated by the matrix multiplication defining $U$
(line $4$ or $7$) costing $O(m^2 d)$ operations.
}
\label{alg:rank1-update}
\end{algorithm}

A naive implementation of the MSG update requires $O(d^2)$ memory and
$O(d^2)$ operations per iteration.  In this section, we show how to
perform this update efficiently by maintaining an up-to-date
eigendecomposition of
$M^{(t)}$. Pseudo-code for the update is given as Algorithm \ref{alg:rank1-update}.
Consider the eigendecomposition $M^{(t)} =U' \code{diag}(\sigma)
(U')^T$, at the $t^{th}$ iteration, where $\rank(M^{(t)})=k_t$ and $U' \in \R^{d \times k_t}$. Given a new observation $x_t$, the eigendecomposition of $M^{(t)} +
\eta x_t x_t^T$ can be updated efficiently using a $(k_t+1) \times
(k_t + 1)$ SVD \citep{Brand:02,Allerton} (steps 1-7 of Algorithm~\ref{alg:rank1-update}).  This rank-one eigen-update
is followed by projection onto the constraints of
\eqref{eq:convex-objective}, invoked as \texttt{project} in step 8 of
Algorithm~\ref{alg:rank1-update}, discussed in the following paragraphs
and given as Algorithm~\ref{alg:project}. The projection procedure is based on the following lemma\footnote{Note that our projection problem onto the capped simplex, even when seen in the vector setting, is substantially different from \cite{Duchi:2008}. We project onto the set $\{0 \leq \sigma \leq 1, \|\sigma\|_1=k\}$ in \eqref{eq:convex-objective} and $\{0 \leq \sigma \leq 1, \|\sigma\|_1=k, \|\sigma\|_0 \leq K\}$ in \eqref{eq:capped-convex-objective} whereas \cite{Duchi:2008} project onto $\{0 \leq \sigma, \|\sigma\|_1=k\}$. 
}:
\begin{lem}\label{lem:sgd-projection}
Let $M'\in\R^{d\times d}$ be a symmetric matrix, with eigenvalues 
$\sigma_{1}',\dots,\sigma_{d}'$ and associated eigenvectors $v_{1}',\dots,v_{d}'$. 
Its projection $M=\project{M'}$ onto the feasible region
of Problem \ref{eq:convex-objective} with respect to the Frobenius
norm, is the unique feasible matrix which has the same eigenvectors
as $M'$, with the associated eigenvalues $\sigma_{1},\dots,\sigma_{d}$
satisfying:
\begin{equation*}
\sigma_{i} = \max\left(0,\min\left(1,\sigma_{i}'+S\right)\right)
\end{equation*}
with $S\in\R$ being chosen in such a way that
$\sum_{i=1}^{d}\sigma_{i}=k$.
\end{lem}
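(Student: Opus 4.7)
The plan is to show first that the projection preserves the eigenvectors of $M'$, reducing the matrix projection to a one-dimensional projection of its eigenvalues, and then to characterize the latter via KKT conditions.

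First, I would observe that the feasible set $\mathcal{C} := \{M : 0 \preceq M \preceq I,\ \trace M = k\}$ is orthogonally invariant, i.e.\ $O^T M O \in \mathcal{C}$ whenever $M \in \mathcal{C}$ and $O$ is orthogonal. A standard consequence (via the von Neumann trace inequality / Hoffman--Wielandt) is that for any symmetric $M'$ the Frobenius projection $M$ of $M'$ onto $\mathcal{C}$ is simultaneously diagonalizable with $M'$. Concretely, writing $M' = V' D' V'^T$ and any candidate $M = V D V^T$ with $D, D'$ diagonal, expanding
\[
\|M - M'\|_F^2 = \trace(D^2) + \trace(D'^2) - 2\,\trace\!\left(D\, V^T V' D' V'^T V\right)
\]
and applying von Neumann's inequality shows that for a \emph{fixed} spectrum $D$ the cross term is maximized when the eigenvectors of $M$ and $M'$ coincide (with the eigenvalues paired in the same order). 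Hence the problem reduces to minimizing $\tfrac{1}{2} \sum_i (\sigma_i - \sigma_i')^2$ over $\sigma \in [0,1]^d$ with $\sum_i \sigma_i = k$, and the optimal $M$ shares its eigenvectors with $M'$.

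Second, for this vector projection I would write the Lagrangian
\[
L(\sigma, S, \alpha, \beta) = \tfrac{1}{2} \sum_i (\sigma_i - \sigma_i')^2 - S\Big(\sum_i \sigma_i - k\Big) - \sum_i \alpha_i \sigma_i + \sum_i \beta_i (\sigma_i - 1),
\]
with $\alpha_i, \beta_i \ge 0$. Stationarity gives $\sigma_i = \sigma_i' + S + \alpha_i - \beta_i$; complementary slackness forces $\alpha_i > 0$ only when $\sigma_i = 0$ and $\beta_i > 0$ only when $\sigma_i = 1$. A short case analysis collapses these three regimes (clipped to $0$, interior, clipped to $1$) into the single formula $\sigma_i = \max(0, \min(1, \sigma_i' + S))$, with $S$ determined by the trace constraint $\sum_i \sigma_i = k$. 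The map $S \mapsto \sum_i \max(0, \min(1, \sigma_i' + S))$ is continuous, nondecreasing, and piecewise linear, ranging from $0$ to $d$, so for any $0 \le k \le d$ a suitable $S$ exists; uniqueness of $M$ follows from strict convexity of $\|\cdot - M'\|_F^2$ on the convex set $\mathcal{C}$.

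The main obstacle is really only the first step: cleanly invoking orthogonal invariance together with the von Neumann trace inequality to reduce to the eigenvalue projection. The subsequent KKT analysis on the capped simplex is routine; the one subtlety, flagged in the paper's footnote, is the presence of the upper bound $\sigma_i \le 1$, which is why both $\max$ and $\min$ appear (in contrast with the standard simplex projection of \cite{Duchi:2008}).
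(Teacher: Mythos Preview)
Your proposal is correct, and the KKT analysis on the capped simplex matches the paper's. The one genuine difference is in how you justify that $M$ and $M'$ share eigenvectors.

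You handle this \emph{before} writing any optimality conditions: invoking orthogonal invariance of the constraint set together with the von~Neumann trace inequality, you argue that for any fixed feasible spectrum the Frobenius distance is minimized when eigenvectors align, which reduces the matrix projection to the vector projection onto the capped simplex; only then do you write a (vector-level) Lagrangian. The paper instead writes the KKT conditions directly at the \emph{matrix} level,
\[
0 = M - M' + \mu I - \sum_i \alpha_i v_i v_i^T + \sum_i \beta_i v_i v_i^T,
\]
with the $v_i$ the eigenvectors of $M$, and then observes that every term except $M'$ is diagonal in the eigenbasis of $M$, forcing $M'$ to share that basis. Both routes are standard; yours is more modular and makes the ``spectral projection reduces to eigenvalue projection'' principle explicit, while the paper's is shorter and gets both the eigenvector alignment and the scalar KKT relation $\sigma_i = \sigma_i' - \mu + \alpha_i - \beta_i$ from a single equation. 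Your added remark that $S \mapsto \sum_i \max(0,\min(1,\sigma_i'+S))$ is continuous, nondecreasing, and surjective onto $[0,d]$ gives an existence argument for $S$ that the paper leaves implicit. Note that your $S$ is the paper's $-\mu$; this is purely a sign convention in the Lagrangian.
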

\begin{proof}
In Appendix \ref{app:proofs}.  
\end{proof}

This result shows that projecting onto the feasible region amounts to finding
the value of $S$ such that, after shifting the eigenvalues by $S$ and clipping
the results to $[0,1]$, the result is feasible.
Importantly, the projection operates \emph{only} on the eigenvalues.  Algorithm \ref{alg:project} contains
pseudocode which finds $S$ from a list of eigenvalues. It is optimized to
efficiently handle repeated eigenvalues---rather than receiving the eigenvalues
in a length-$d$ list, it instead receives a length-$n$ list containing only the
\emph{distinct} eigenvalues, with $\kappa$ containing the corresponding
multiplicities. In Sections \ref{subsec:low-rank} and \ref{subsec:capped-rank},
we will see why this is an important optimization.

The central idea motivating the algorithm is that, in a sorted array of
eigenvalues, all elements with indices below some threshold $i$ will be clipped
to $0$, and all of those with indices above another threshold $j$ will be
clipped to $1$. The pseudocode simply searches over all possible pairs of such
thresholds until it finds the one that works.

The rank-one eigen-update combined with the fast projection step yields
an efficient MSG update that requires $O(d k_t)$ memory and $O(d
k_t^2)$ operations per iteration, where recall that $k_t$ is the rank
of the iterate $M^{(t)}$.  This is a significant improvement
over the $O(d^2)$ memory and $O(d^2)$ computation required by a standard
implementation of MSG, if the iterates have relatively low rank.

\begin{algorithm}[t]

\begin{small}
\begin{tabbing}
\textbf{mm}\=mm\=mm\=mm\=mm\=mm\=mm\=mm\=mm\=\kill
\>$\code{project}\left( d,k,n:\N, \sigma':\R^{n}, \kappa':\N^{n} \right)$\\
\>\textbf{1}\'\>$\sigma',\kappa' \leftarrow \code{sort}( \sigma',\kappa' )$;\\
\>\textbf{2}\'\>$i \leftarrow 1$; $j \leftarrow 1$; $s_i \leftarrow 0$; $s_j \leftarrow 0$; $c_i \leftarrow 0$; $c_j \leftarrow 0$;\\
\>\textbf{3}\'\>$\code{while } i \le n$\\
\>\textbf{4}\'\>\>$\code{if } ( i < j )$\\
\>\textbf{5}\'\>\>\>$S \leftarrow ( k - ( s_j - s_i ) - ( d - c_j ) ) / ( c_j - c_i )$;\\
\>\textbf{6}\'\>\>\>$b \leftarrow ($\\
\>\textbf{7}\'\>\>\>\>$( \sigma_i' + S \ge 0 ) \code{ and } ( \sigma_{j-1}' + S \le 1 )$\\
\>\textbf{8}\'\>\>\>\>$\code{and } ( ( i \le 1 ) \code{ or } ( \sigma_{i-1}' + S \le 0 ) )$\\
\>\textbf{9}\'\>\>\>\>$\code{and } ( ( j \ge n ) \code{ or } ( \sigma_{j+1}' \ge 1 ) )$\\
\>\textbf{10}\'\>\>\>$)$;\\
\>\textbf{11}\'\>\>\>$\code{return }S \code{ if } b$;\\
\>\textbf{12}\'\>\>$\code{if } ( j \le n ) \code{ and } ( \sigma_j' - \sigma_i' \le 1 )$\\
\>\textbf{13}\'\>\>\>$s_j \leftarrow s_j + \kappa_j' \sigma_j'$; $c_j \leftarrow c_j + \kappa_j'$; $j \leftarrow j + 1$;\\
\>\textbf{14}\'\>\>$\code{else}$\\
\>\textbf{15}\'\>\>\>$s_i \leftarrow s_i + \kappa_i' \sigma_i'$; $c_i \leftarrow c_i + \kappa_i'$; $i \leftarrow i + 1$;\\
\>\textbf{16}\'\>$\code{return error}$;
\end{tabbing}
\end{small}

\caption{
\small
Routine which finds the $S$ of Lemma \ref{lem:sgd-projection}. It takes as
parameters the dimension $d$, ``target'' subspace dimension $k$, and the number
of \emph{distinct} eigenvalues $n$ of the current iterate. The length-$n$
arrays $\sigma'$ and $\kappa'$ contain the distinct eigenvalues and their
multiplicities, respectively, of $M'$ (with $\sum_{i=1}^{n} \kappa_i' = d$).
Line $1$ sorts $\sigma'$ and re-orders $\kappa'$ so as to match this sorting.
The loop will be run at most $2n$ times (once for each possible increment to
$i$ or $j$ on lines $12$--$15$), so the computational cost is dominated by that
of the sort: $O(n \log n)$.
}

\label{alg:project}

\end{algorithm}

\subsection{Matrix Exponentiated Gradient}
\label{sec:meg}
Since $M$ is constrained by its trace, and not by its Frobenius norm,
it is tempting to consider mirror descent (MD) \citep{BeckTe03} instead 
of SGD updates for solving Problem \ref{eq:convex-objective}.  Recall
that the Mirror Descent updates depend on a choice of ``potential
function'' $\Psi(\cdot)$ which should be chosen according to the 
geometry of the feasible set {\em and} the subgradients \citep{UniMD}.
Using the squared Frobenius norm as a potential function, i.e.
$\Psi(M)=\|M\|^2_F$, yields SGD, i.e.~the MSG updates \eqref{eq:sgd-update}.
The trace-norm constraint suggests using the von Neumann entropy of
the spectrum as the potential function, i.e. $\Psi_h(M) = \sum_i
\lambda_i(M) \log \lambda_i(M)$ where $\lambda_i$ are the eigenvalues
of $M$.  This leads to multiplicative updates which we refer to as
Matrix Exponentiated Gradient (MEG) update similar to those presented
by \citep{Warmuth:08}.  In fact, Warmuth and Kuzmin's algorithm
exactly corresponds to online Mirror Descent on
\eqref{eq:convex-objective} with this potential function, but taking
the optimization variable to be $\orthM = I-M$ (with the constraints
$\trace \orthM = d-k$ and $0 \preceq \orthM \preceq I$).  In either
case, using the entropy potential, despite being well suited for the
trace-geometry, does not actually lead to better
dependence\footnote{This is because in our case, due to the other
  constraints, $\norm{M^*}_F=\sqrt{\trace M^*}$.  Furthermore, the SGD
  analysis depends on the Frobenius norm of the stochastic gradients,
  but since all stochastic gradients are rank one, this is the same as
  their spectral norm, which comes up in the entropy-case analysis,
  and again there is no benefit.} on $d$ or $k$, and Mirror Descent
analysis again yields an excess loss of $\sqrt{k/T}$.  Warmuth and
Kuzmin do present an ``optimistic'' analysis, with a dependence on the
``reconstruction error'' $L^* = \expectation{x^T(I-M^*)x}$, which
yields an excess error of $O\left( \sqrt{\frac{L^* k \log(d/k)}{T}} +
  \frac{k\log(d/k)}{T} \right)$ (their logarithmic term can be avoided
by a more careful analysis).

\section{MSG runtime and the rank of the iterates}
\label{subsec:low-rank}

As we saw, MSG requires $O(k/\epsilon^2)$ iterations to obtain an
$\epsilon$-suboptimal solution and each iteration of MSG costs
$O(k_t^2 d)$ operations where $k_t$ is the rank of iterate $M^{(t)}$.
This yields a total runtime of $O(\bar{k^2} dk /\epsilon^2)$, where
$\bar{k^2}=\sum_{t=1}^T k^2_t$. Clearly, the runtime for MSG depends
critically on the rank of the iterates. If the rank of the
iterates is as large as $d$, MSG achieves a runtime that is cubic in
the dimensionality. On the other hand, if the rank of the iterates is
$O(k)$, the runtime is linear in the dimensionality.
Fortunately, in practice the ranks are typically much lower than the
dimensionality. The reason for this is that MSG performs a rank-$1$
update followed by a projection onto the constraints.  Since $M'
=M^{(t)} + \eta x_t x_t^T$ will have a \emph{larger} trace than
$M^{(t)}$ (i.e. $\trace M' \ge k$), the projection, as is shown by
Lemma \ref{lem:sgd-projection}, will \emph{subtract} a quantity $S$
from every eigenvalue of $M'$, clipping each to $0$ if it becomes
negative. Therefore, each MSG update will increase the rank of the
iterate by at most $1$, and has the potential to decrease it, perhaps
significantly. It's very difficult to theoretically quantify how the
rank of the iterates will evolve over time, but we have observed
empirically that the iterates do tend to have relatively low rank. 

We explore this issue in greater detail experimentally, on a distribution which we expect to be difficult for MSG. To this end, we generated data from known $32$-dimensional distributions with diagonal covariance matrices $\Sigma=\code{diag}(\sigma/\norm{\sigma})$, where 
$\sigma_i =\tau^{-i}/\sum_{j=1}^{32} \tau^{-j}$, for $i=1,\ldots,32$ and for some $\tau > 1$. Observe that $\Sigma^{(k)}$ has a smoothly-decaying set of eigenvalues and the rate of decay is controlled by $\tau$. As $\tau \to 1$, the spectrum becomes flatter resulting in distributions that present challenging test cases for MSG. We experimented with $\tau=1.1$ and $k\in\{1,2,4\}$, where $k$ is the desired subspace dimension used by each algorithm. The data is generated by sampling the $i^{th}$ standard unit basis vector $e_i$ with probability $\sqrt{\Sigma_{ii}}$. We refer to this as the ``orthogonal distribution'', since it is a discrete distribution over $32$ orthogonal vectors. 

\begin{figure}
\begin{center}
\begin{tabular}{cc}
{$k_t'$} & {Spectrum}\\  
\includegraphics[width=0.48\textwidth]{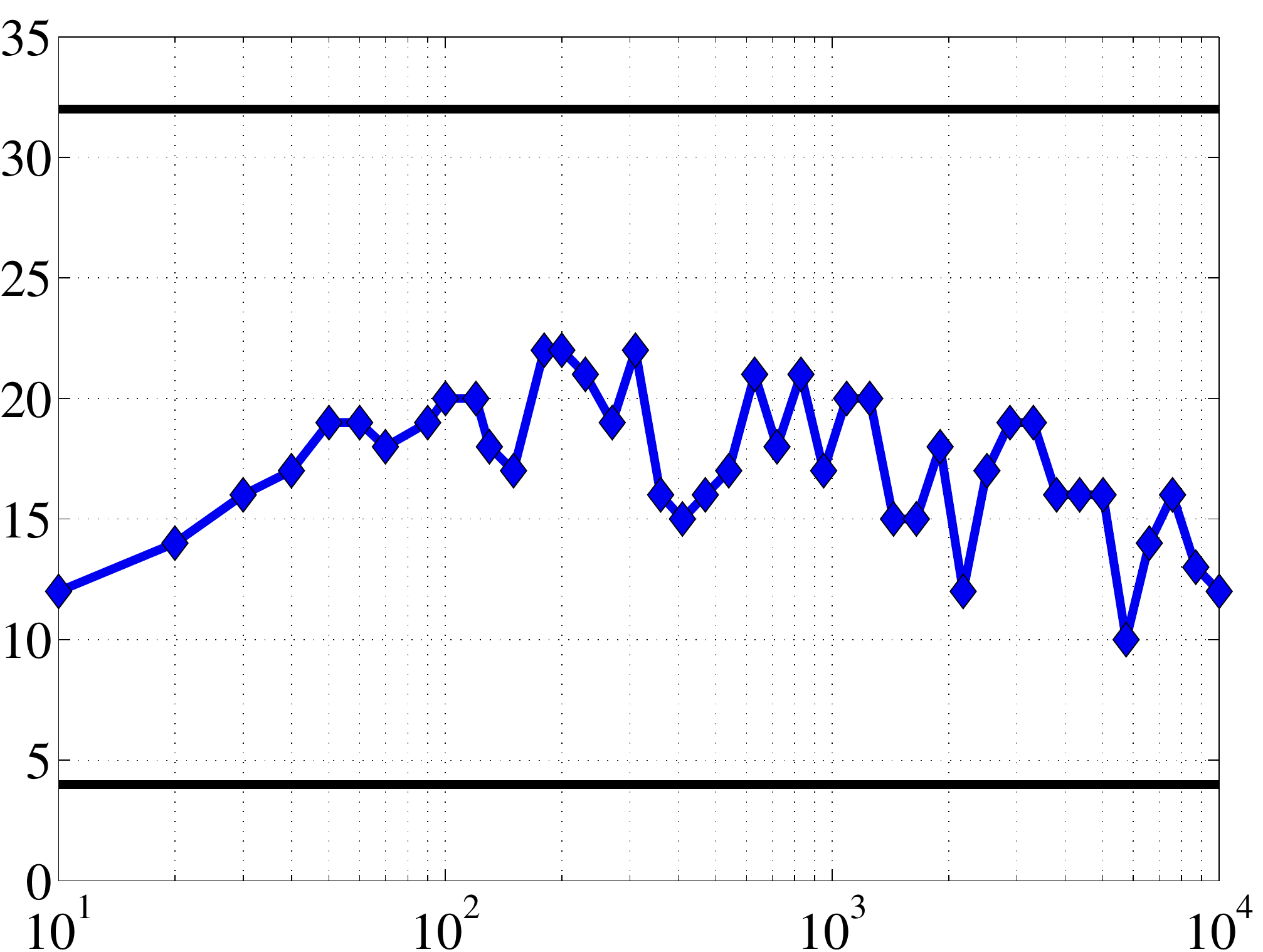} & 
\includegraphics[width=0.48\textwidth]{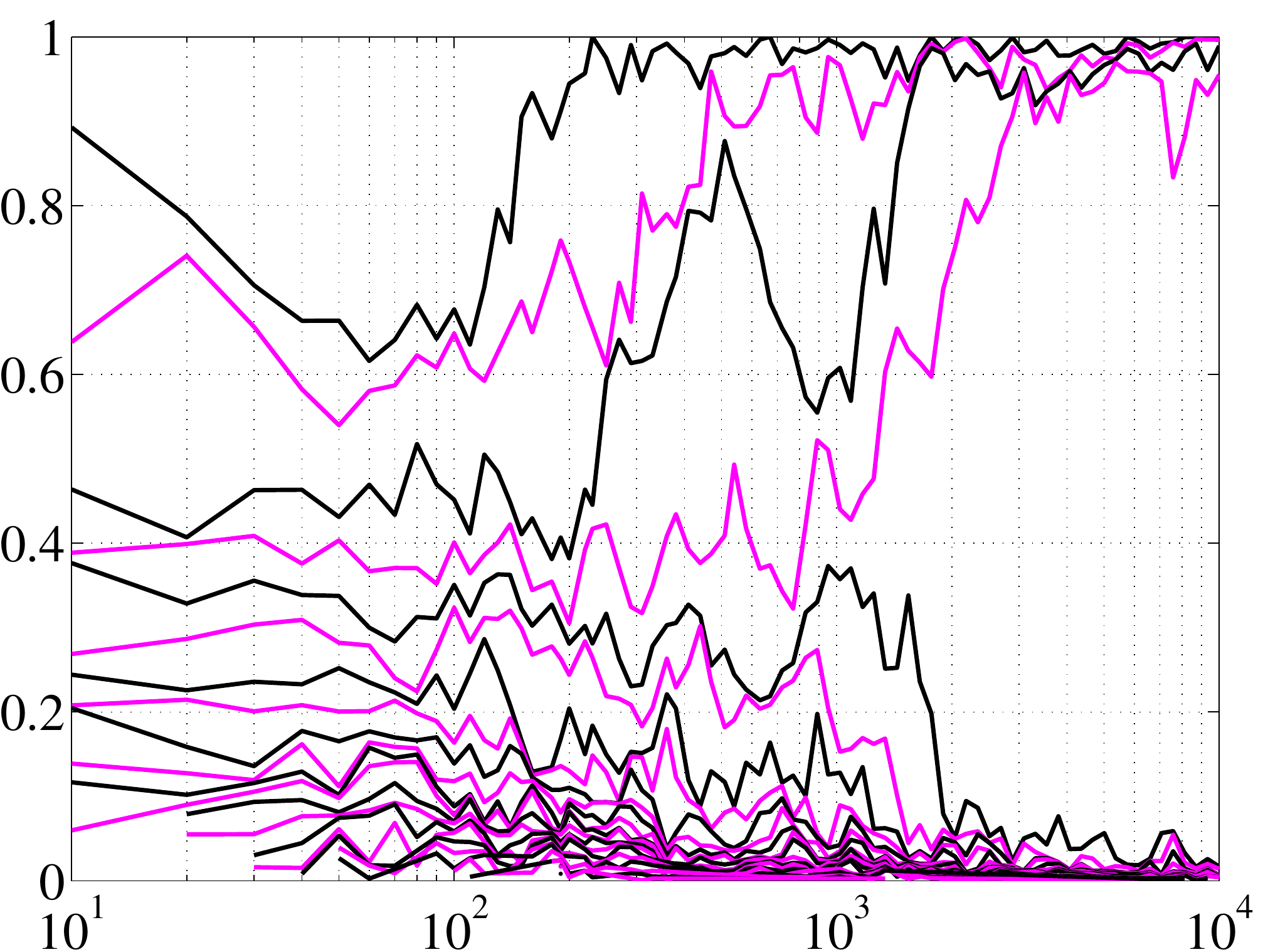} 
\\
\scriptsize{Iterations} & \scriptsize{Iterations} 
\end{tabular}
\end{center}
\caption{
\small
The ranks $k_t'$ (left) and the eigenvalues (right) of the MSG iterates $M^{(t)}$. 
}
\label{fig:simulated-runtime}
\end{figure}

In Figure \ref{fig:simulated-runtime}, we show the results 
with $k=4$. We can see from the left-hand plot that MSG algorithm maintains a subspace of dimension around $15$. The plot on the right shows how the set of nonzero eigenvalues of the MSG iterates evolves over time, from which we can see that many of the extra dimensions are ``wasted'' on very small eigenvalues, corresponding to directions which leave the state matrix only a handful of iterations after they enter. This suggests that constraining $k_t'$ can lead to significant speedups and  motivates capped MSG updates discussed in the next section.

\section{Capped MSG}\label{subsec:capped-rank}
While, as was observed in the previous section, MSG's iterates will tend to
have ranks $k_t'$ smaller than $d$, they will nevertheless also be larger than
$k$. For this reason, in practice, we recommend adding a hard constraint $K$ on
the rank of the iterates:
\begin{align}
\label{eq:capped-convex-objective} \mbox{maximize} : & \ \
\expectation[x\sim\mathcal{D}]{x^T M x} \\
\notag \mbox{subject to} : &\ \ M \in \R^{d\times d}, 0 \preceq M \preceq I \\
\notag  &\ \ \trace M = k, \rank M \le K
\end{align}
We will refer MSG where the projection is replaced with a projection
onto the constraints of \eqref{eq:capped-convex-objective} (i.e.~where
the iterates are SGD iterates on \eqref{eq:capped-convex-objective})
as ``capped MSG''.  For similar reasons as discussed before, as long
as $K\geq k$, Problem \ref{eq:capped-convex-objective} and Problem
\ref{eq:convex-objective} have the same optimum, and it is achieved at
a rank-$k$ matrix, and the extra rank constraint in \ref{eq:capped-convex-objective} is inactive at the optimum. However, the rank 
constraint does affect the iterates, especially since Problem
\ref{eq:capped-convex-objective} is no longer convex.  Nonetheless if
$K>k$ (i.e.~the hard rank-constraint $K$ is {\em strictly} larger than
the target rank $k$), we can easily check if we are at a global
optimum of \ref{eq:capped-convex-objective}, and hence of
\ref{eq:convex-objective}: if the capped MSG algorithm converges to a
solution of rank $K$, then the upper bound $K$ should be increased.
Conversely, if it has converged to a rank-deficient solution, then it
must be the global optimum.  There is thus an advantage in using
$K>k$, and we recommend setting $K=k+1$, as we do in our experiments,
and increasing $K$ only if a rank deficient solution is not found.

Setting $K=k$, the only way to satisfy the trace constraint is to have
all non-zero eigenvalues be equal to one, and
\eqref{eq:capped-convex-objective} becomes identical to
\eqref{eq:objective2}.  The detour through the convex problem
\eqref{eq:convex-objective}, allows us to increase the search rank
$K$, allowing for more flexibility in the search, while still
encouraging the desired rank $k$ through the rank constraint.

\subsection{Implementing the projection}
Implementing capped MSG is similar to implementing MSG (Algorithm~\ref{alg:rank1-update}) except for the projection step. Reasoning as in the proof of Lemma \ref{lem:sgd-projection} shows that if $M^{(t+1)}\!=\!\mathcal{P}\left( M' \right)$ with $M' = M^{(t)} + \eta x_t x_t^T$, then $M^{(t)}$ and $M'$ are simultaneously diagonalizable, and therefore we can consider only how the projection acts on the eigenvalues. Hence, if we let $\sigma'$ be the vector of the eigenvalues of $M'$, and suppose that there are more than $K$ such eigenvalues, then there is a size-$K$ subset of $\sigma'$ such~that applying Algorithm \ref{alg:project} to this set gives the projected eigenvalues. Since we perform only a rank-$1$ update at every iteration, we must check at most $K$ possibilities, at a total cost of~$O(K^2 \log K)$ operations, with no effect on asymptotic runtime because Algorithm \ref{alg:rank1-update} requires $O(K^2 d)$ operations.

\subsection{Relationship to the incremental PCA method}
\label{sec:incremental}
The capped MSG updates with $K=k$ are similar to the
incremental algorithm of \citet{Allerton}. The incremental algorithm
maintains a rank-$k$ approximation of the covariance matrix with
updates given by
\begin{equation*}
{M}^{(t+1)} = \project[\textrm{rank-$k$}]{{M}^{(t)} + x_t x_t^T}, 
\end{equation*}
where the projection is onto the set of rank-$k$ matrices.  Unlike
MSG, incremental updates do not have a step-size.  Updates can be
performed efficiently much in the same way as described in
Section~\ref{sec:effproj}, by maintaining the eigendecomposition of
the iterates.

\begin{figure*}[t]
\begin{center}
\begin{tabular}{ @{} L @{} S @{} S @{} S @{} }
& \large{$k=1$} & \large{$k=2$} & \large{$k=4$} \\
\rotatebox{90}{\scriptsize{Suboptimality}} &
\includegraphics[width=0.32\textwidth]{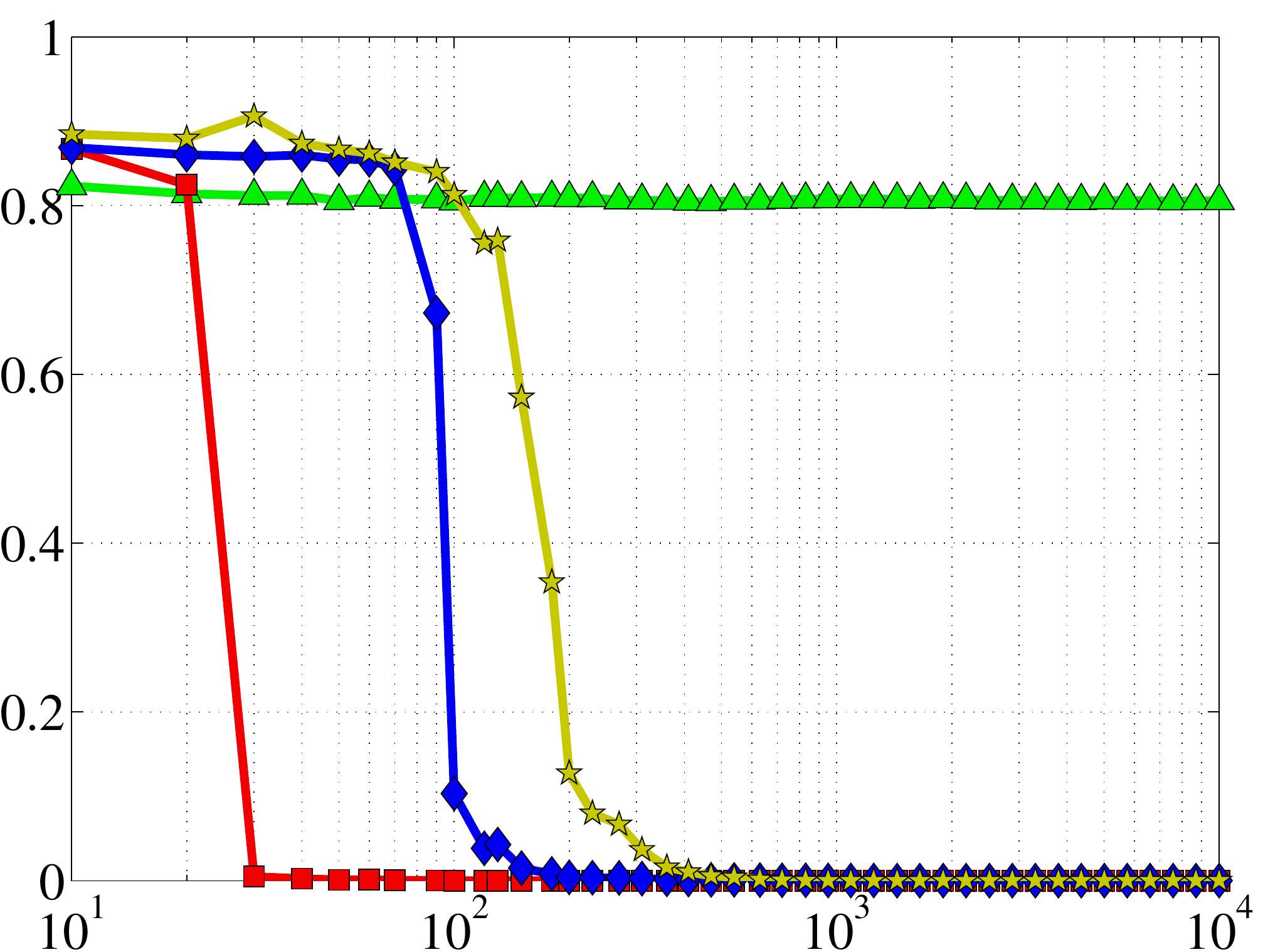} &
\includegraphics[width=0.32\textwidth]{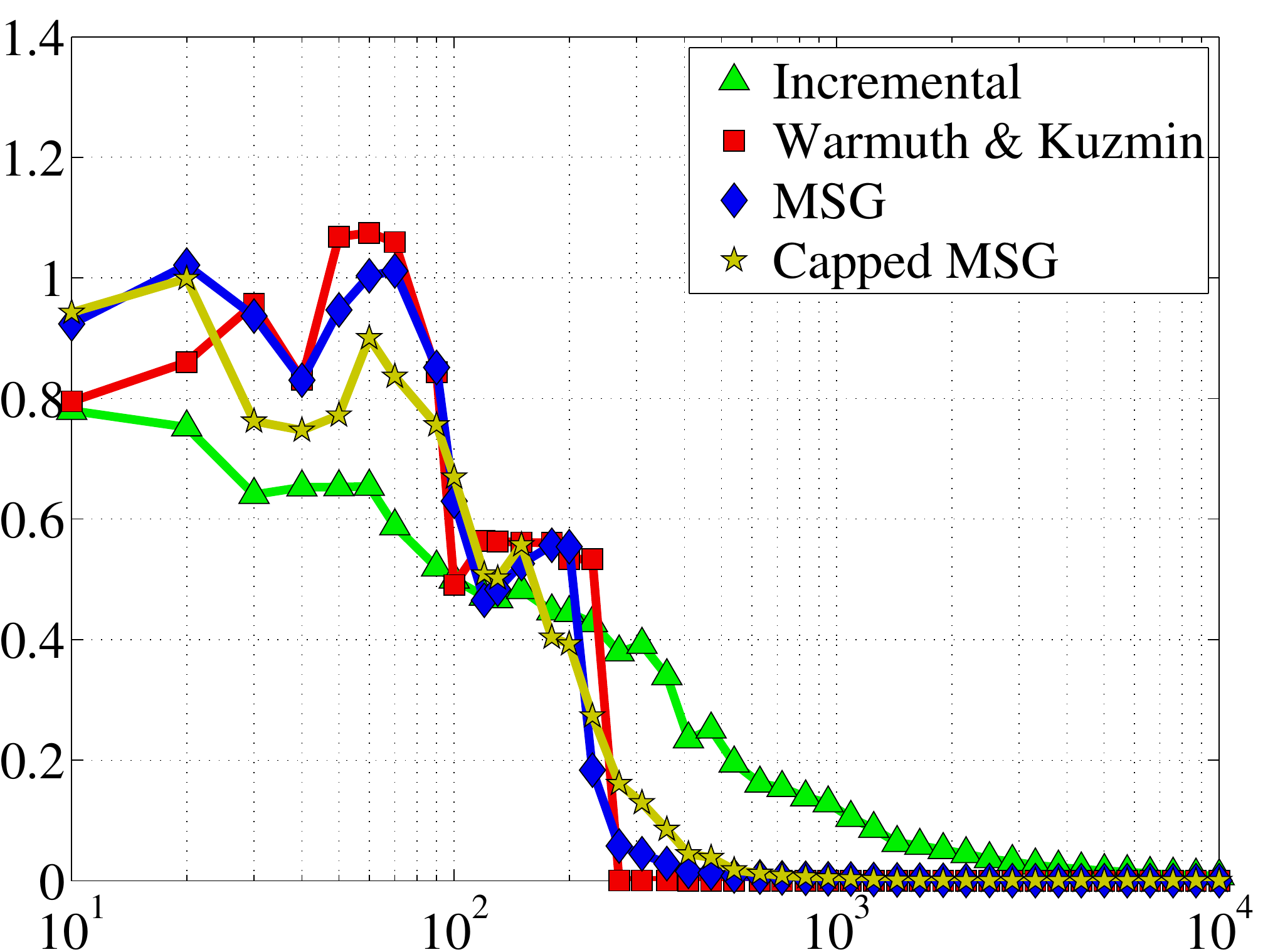} &
\includegraphics[width=0.32\textwidth]{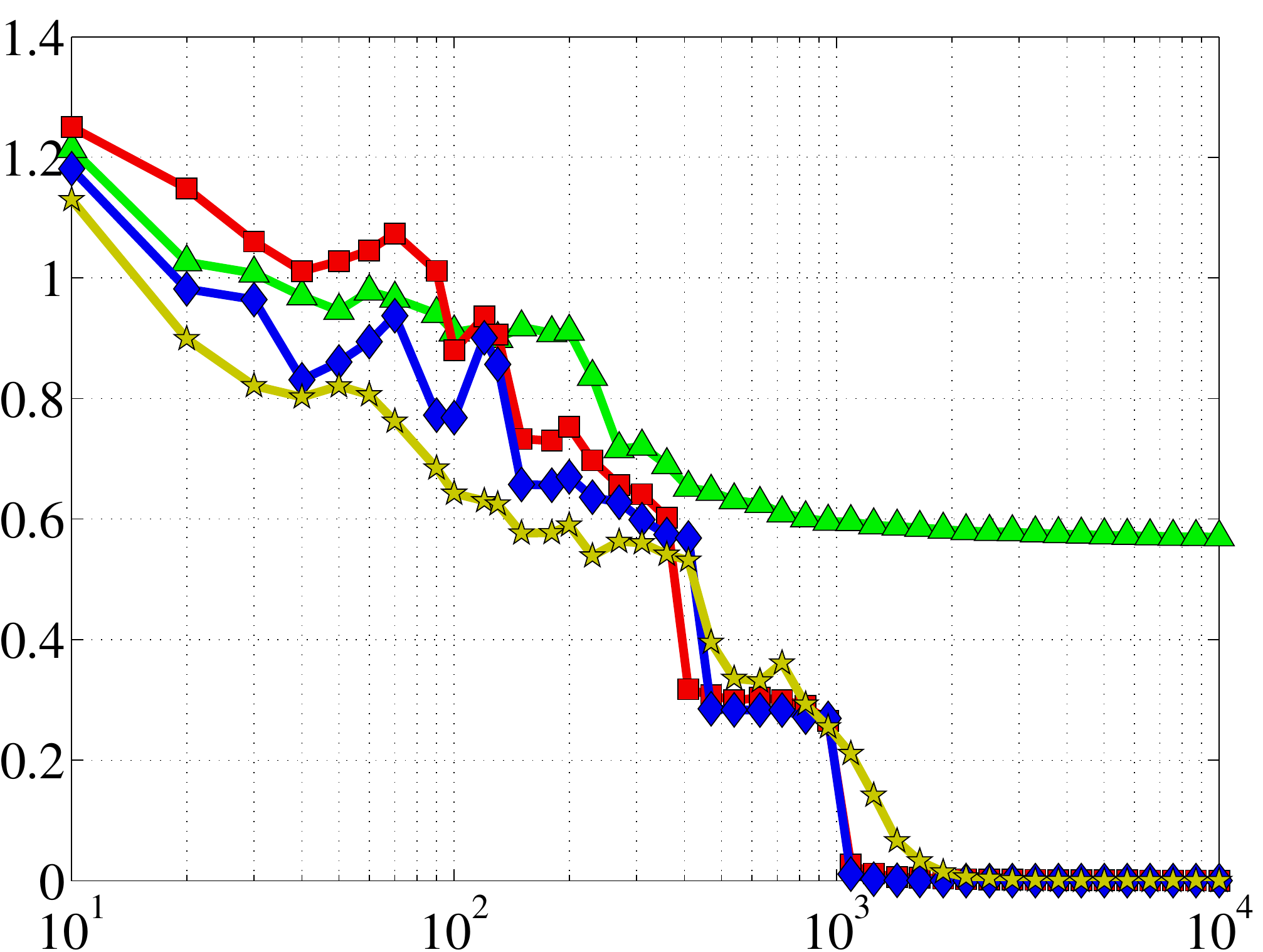} \\
& \scriptsize{Iterations} & \scriptsize{Iterations} & \scriptsize{Iterations}
\end{tabular}
\end{center}
\caption{
\small
Comparison on simulated data for different values of parameter $k$. 
}
\label{fig:simulated-iterations}
\end{figure*}

The incremental algorithm was found to perform extremely well in
practice--it was the best, in fact, among the compared algorithms \citep{Allerton}.
However, there exist cases in which the incremental algorithm can get
stuck at a suboptimal solution. For example, If the data are drawn
from a discrete distribution $\mathcal{D}$ which samples
$[\sqrt{3},0]^T$ with probability $1/3$ and $[0,\sqrt{2}]^T$ with
probability $2/3$, and one runs the incremental algorithm with $k=1$,
then it will converge to $[1,0]^T$ with probability $5/9$, despite the
fact that the maximal eigenvector is $[0,1]^T$.  The reason for this
failure is essentially that the orthogonality of the data interacts
poorly with the low-rank projection: any update which does not
entirely displace the maximal eigenvector in one iteration will be
removed entirely by the projection, causing the algorithm to fail to
make progress. Capped MSG algorithm with $K>k$, will not get stuck in
such situations, using the additional ``dimensions'' to ``search'' in
the new direction.  Only as it becomes more confident in its current
candidate, the trace of $M$ will become increasingly concentrated on
the top $k$ directions. To illustrate this empirically, we generalized the toy example above and generated the data using the $32$-dimensional ``orthogonal'' distribution described in Sec.~\ref{subsec:low-rank}. This distribution  presents challenging test-cases for MSG, capped MSG as well as incremental algorithm. Figure \ref{fig:simulated-iterations} shows plots of individual runs of MSG, capped MSG with $K=k+1$, the incremental algorithm, and Warmuth and Kuzmin's algorithm, all based on the same sequence of samples drawn from the orthogonal distribution. We compare algorithms in terms of the suboptimality on the population objective based on the largest $k$ eigenvalues of the state matrix $M^{(t)}$. 
The plots show the incremental algorithm getting 
stuck for $k\in\{1,4\}$, and the others intermittently plateauing at intermediate solutions before beginning to again converge rapidly towards the optimum. This behavior is to be expected on the capped MSG algorithm, due to the fact that the dimension of the subspace stored at each iterate is constrained. However, it is somewhat surprising that MSG and Warmuth and Kuzmin's algorithm behaved similarly, and barely faster than capped MSG.

\section{Experiments}
\label{sec:experiments}

\begin{figure*}[t]
\begin{center}
\begin{tabular}{ @{} L @{} S @{} S @{} S @{} }
& \large{$k=1$} & \large{$k=4$} & \large{$k=8$} \\
\rotatebox{90}{\scriptsize{Suboptimality}} &
\includegraphics[width=0.32\textwidth]{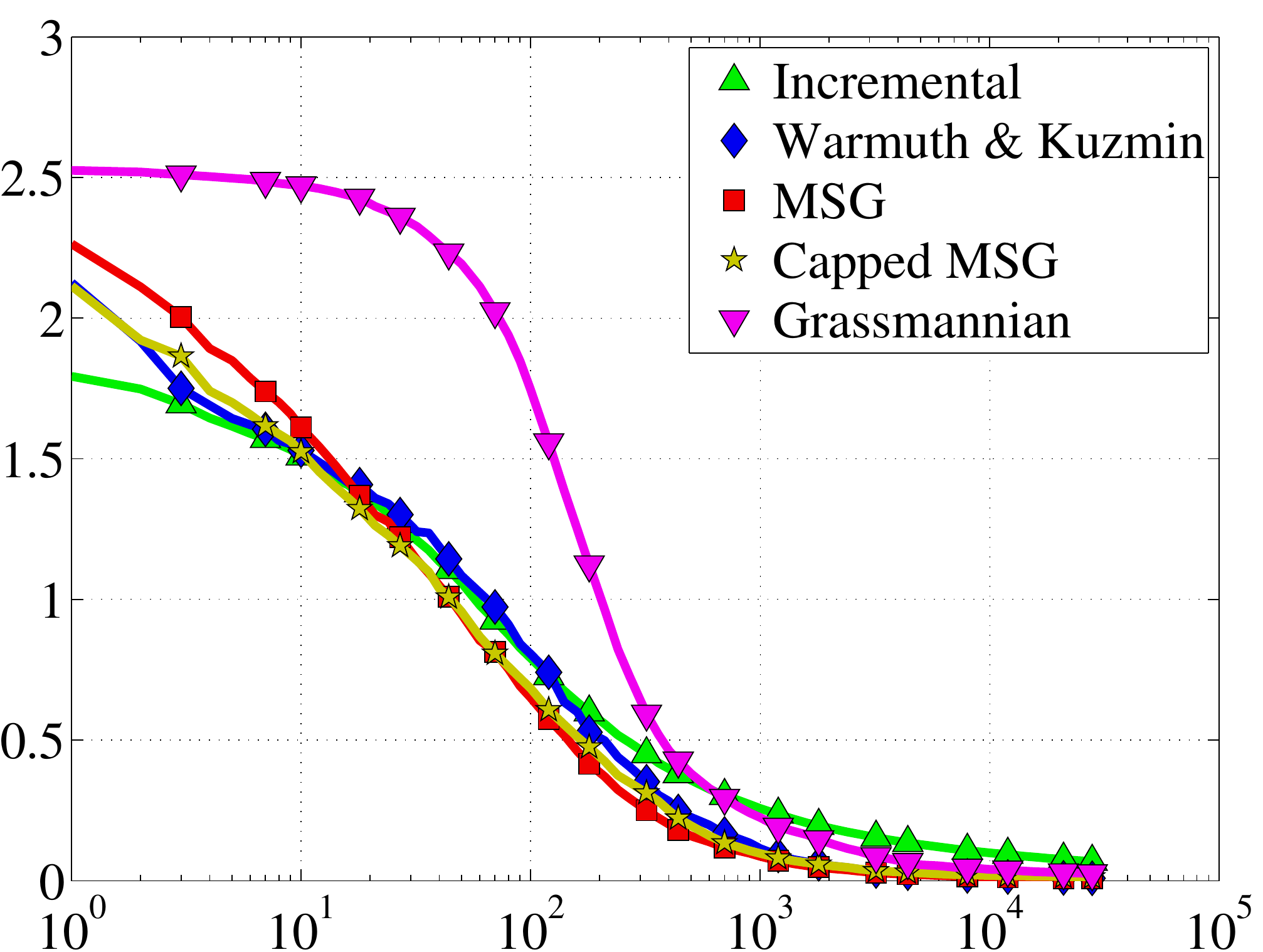} &
\includegraphics[width=0.32\textwidth]{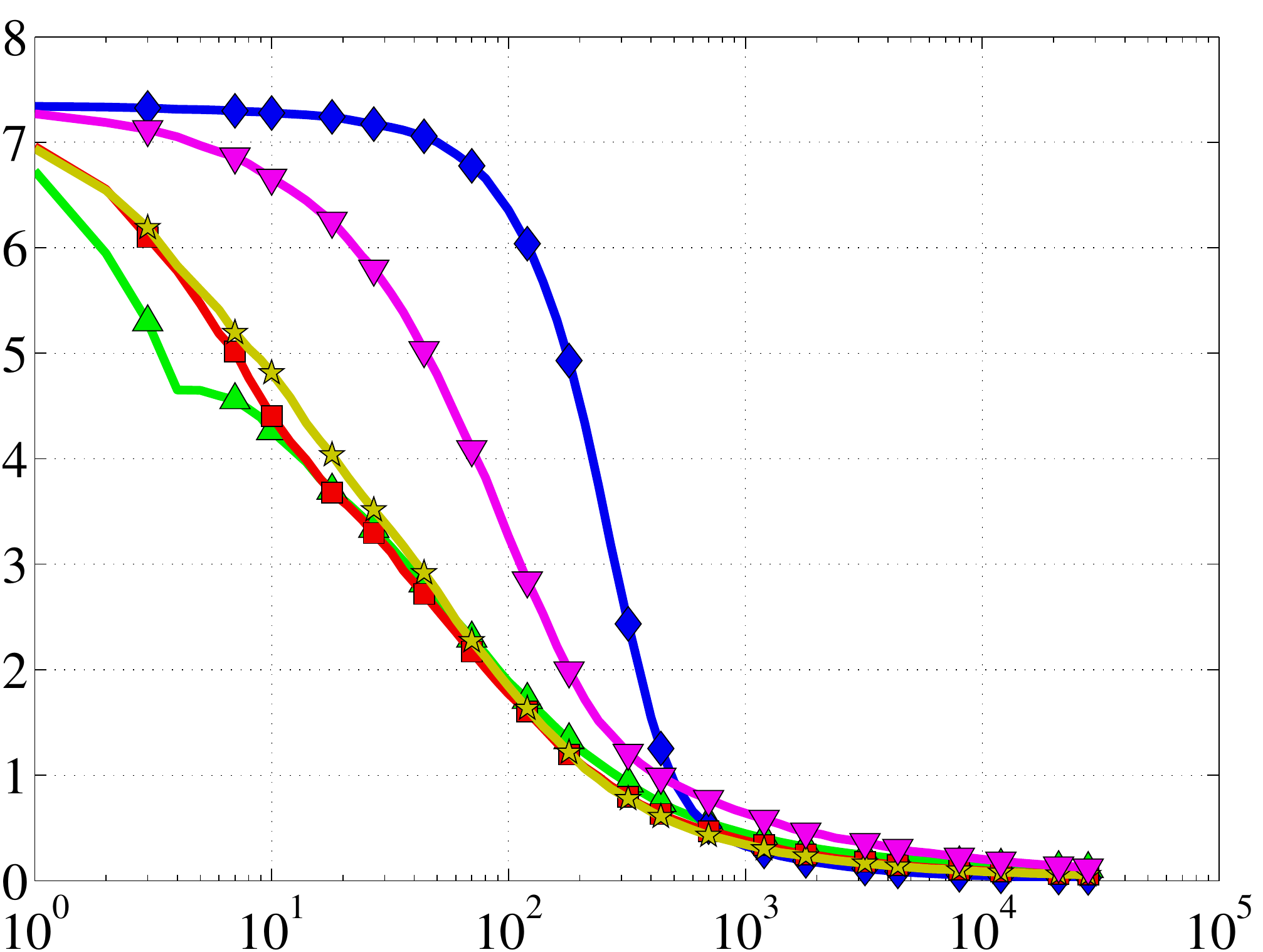} &
\includegraphics[width=0.32\textwidth]{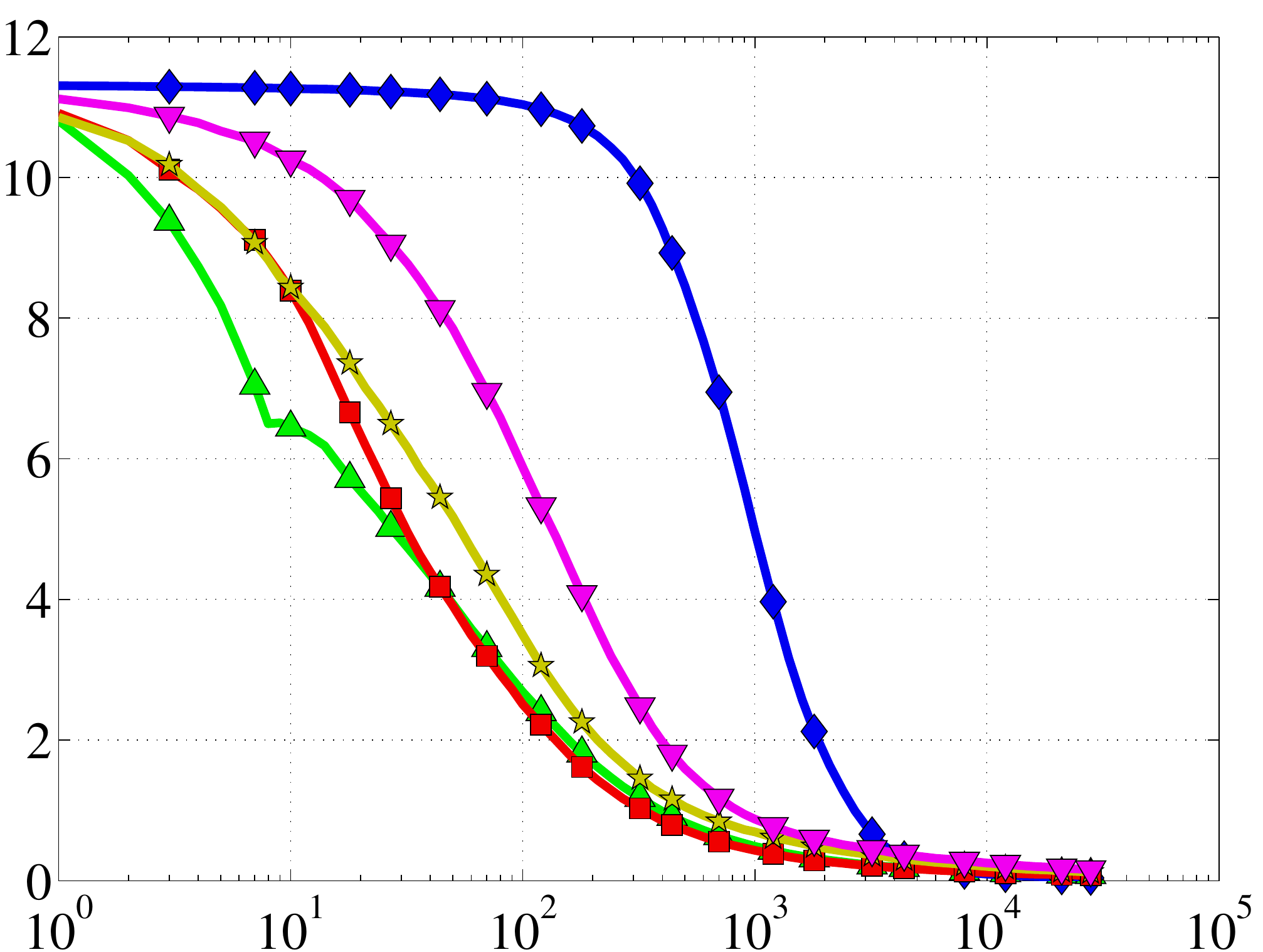} \\
& \scriptsize{Iterations} & \scriptsize{Iterations} & \scriptsize{Iterations}\\
\rotatebox{90}{\scriptsize{Suboptimality}} &
\includegraphics[width=0.32\textwidth]{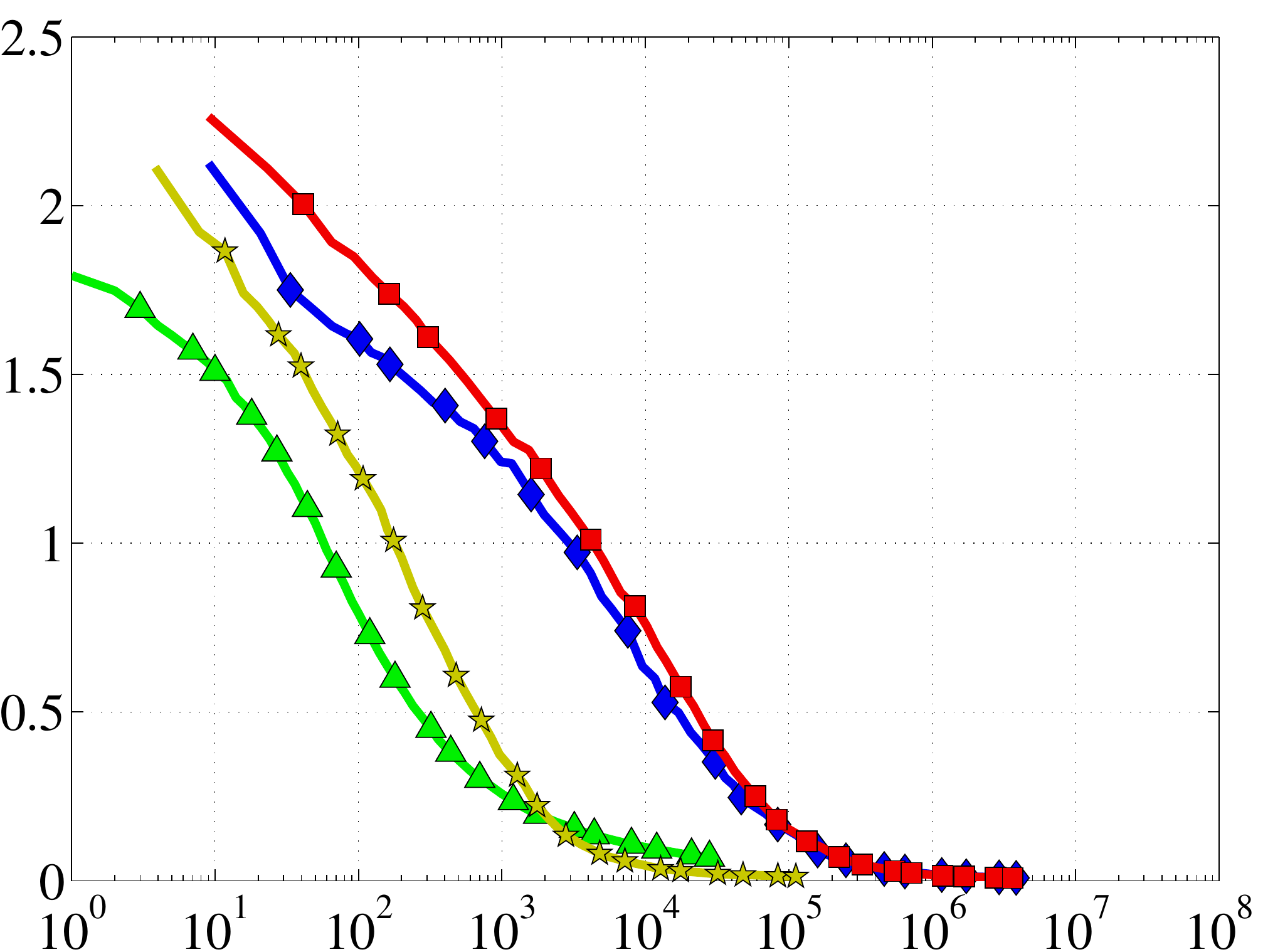} &
\includegraphics[width=0.32\textwidth]{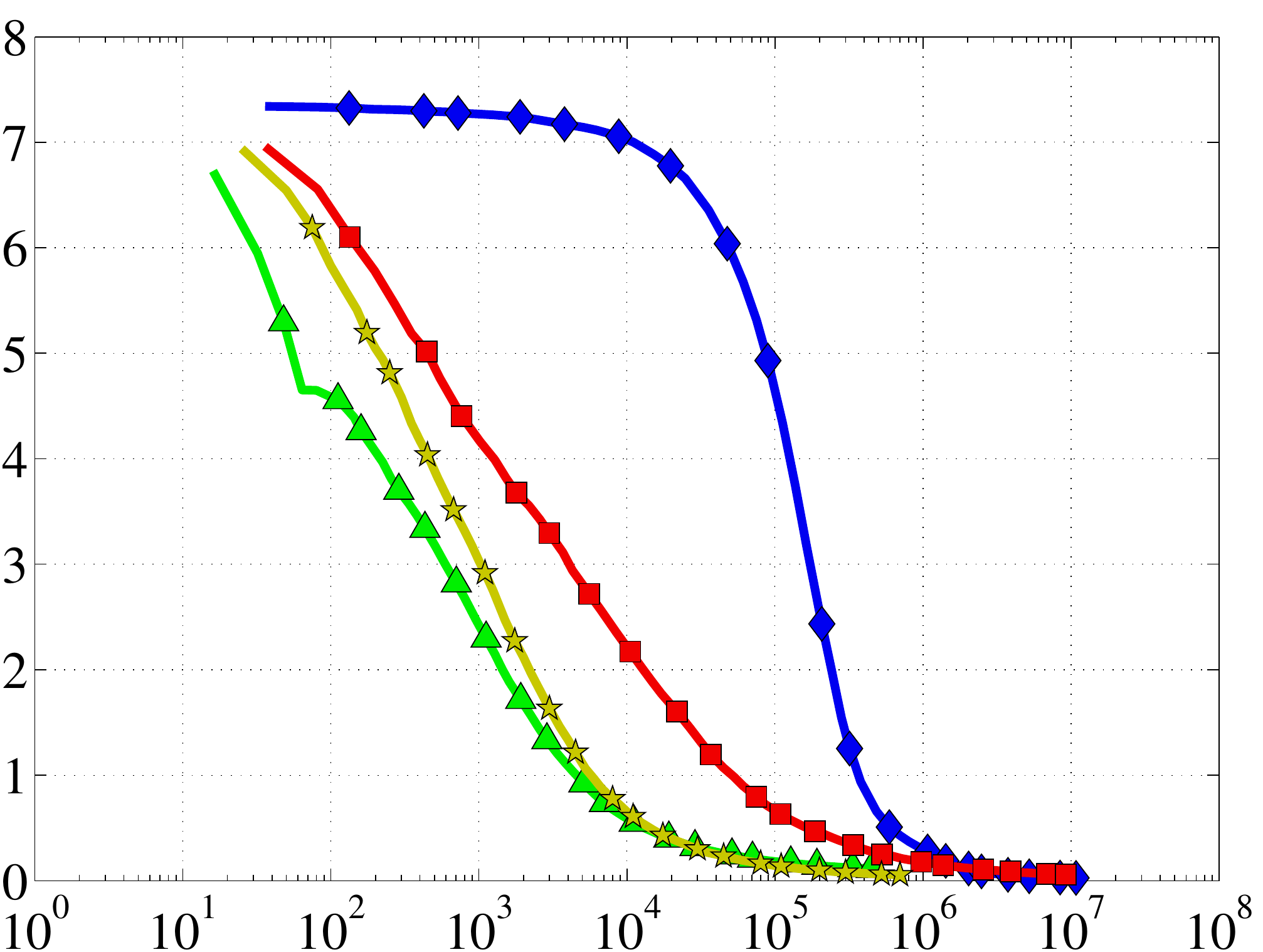} &
\includegraphics[width=0.32\textwidth]{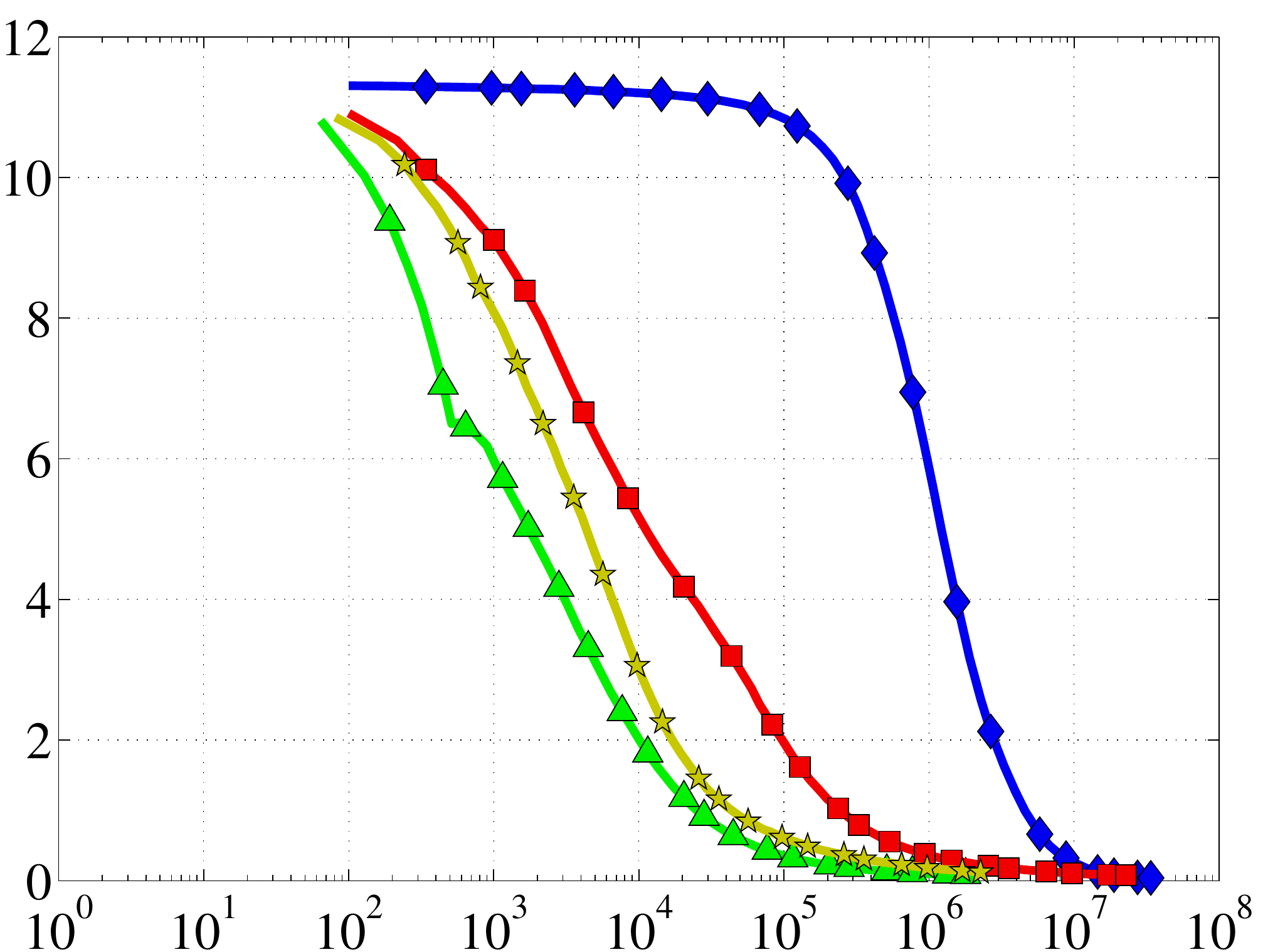} \\
& \scriptsize{Est. runtime} & \scriptsize{Est. runtime} & \scriptsize{Est. runtime}
\end{tabular}
\end{center}
\caption{
\small
Comparison on the MNIST dataset. The top row of plots shows suboptimality as a function of iteration count, while the bottom row suboptimality as a function of estimated runtime $\sum_{s=1}^{t} (k_s')^2$.
}
\label{fig:real}
\end{figure*}

We also compared the algorithms on the real-world MNIST dataset, which consists of $70,000$ binary images of handwritten digits of size $28 \times 28$, resulting in a dimensionality of $784$. We pre-normalized the data by mean centering the feature vectors and scaling each feature by the product of its standard deviation and the data dimension, so that each feature vector is zero mean and unit norm in expectation. In addition to MSG, capped MSG, the incremental algorithm and Warmuth and Kuzmin's algorithm, we also compare to a Grassmannian SGD algorithm of \citet{GROUSE:2010}. All algorithms except the incremental algorithm have a step-size parameter. In these experiments, we ran each algorithm with decreasing step sizes $\eta_t = c/\sqrt{t}$ for $c \in \{2^{-12}, 2^{-19}, \ldots, 2^5\}$ and picked the best $c$, in terms of the average suboptimality over the run, on a validation set. Since we cannot evaluate the true population objective, we estimate it by evaluating on a held-out test set. We use 40\% of samples in the dataset for training, 20\% for validation (tuning step-size), and 40\% for testing. We are interested in learning a maximum variance subspace of dimension $k\in\{1,4,8\}$ in a single ``pass'' over the training sample. In order to compare MSG, capped MSG, incremental and Warmuth and Kuzmin's algorithm in terms of runtime, we calculate the dominant term in the computational complexity: $\!\sum_{s=1}^{t}\!(k_s')^2\!$. The results are averaged over $100$ random splits into train-validation-test~sets. 

We can see from Figure \ref{fig:real} that the incremental algorithm makes the most progress per iteration and is also the fastest of all algorithms. MSG is comparable to the incremental algorithm in terms of the the progress made per iteration. However, its runtime is slightly worse than the incremental because it will often keep a slightly larger representation (of dimension $k_t'$) than the incremental algorithm. The capped MSG variant (with $K=k+1$) is significantly faster--almost as fast as the incremental algorithm, while, as we saw in the previous section, being less prone to getting stuck. Warmuth and Kuzmin's algorithm fares well with $k=1$, but its performance drops for higher $k$. Inspection of the underlying data shows that, in the $k\in\{4,8\}$ experiments, it also tends to have a larger $k_t'$ than MSG in these experiments, and therefore has a higher cost-per-iteration. Grassmannian SGD  performs better than Warmuth and Kuzmin, but much worse when compared with MSG and capped MSG.

\section{Conclusions}
\label{sec:discussion}
In this paper, we presented a careful development and analysis of MSG, a stochastic approximation algorithm for PCA, which enjoys good theoretical guarantees and offers a computationally efficient variant, capped MSG. We show that capped MSG is well-motivated theoretically and that it does not get stuck at a suboptimal solution. Capped MSG is also shown to have excellent empirical performance and it therefore is a much better alternative to the recently proposed incremental PCA algorithm of \cite{Allerton}. Furthermore, we provided a cleaner interpretation of PCA updates of \cite{Warmuth:08} in terms of Matrix Exponentiated Gradient (MEG) updates and 
showed that both MSG and MEG can be interpreted as mirror descent algorithms on the same relaxation of the PCA optimization problem but with different distance generating functions.

\bibliographystyle{plainnat}
\bibliography{main}

\newpage
\onecolumn
\appendix

\section{Proof of Lemma \ref{lem:sgd-projection}}\label{app:proofs}

\begin{replem}{lem:sgd-projection}
Let $M'\in\R^{d\times d}$ be a symmetric matrix, with eigenvalues 
$\sigma_{1}',\dots,\sigma_{d}'$ and associated eigenvectors $v_{1}',\dots,v_{d}'$. 
If $M=\project{M'}$ projects $M'$ onto the feasible region
of Problem \ref{eq:convex-objective} with respect to the Frobenius norm, then
$M$ will be the unique feasible matrix which has the same set of eigenvectors
as $M'$, with the associated eigenvalues $\sigma_{1},\dots,\sigma_{d}$
satisfying:
\begin{equation*}
\sigma_{i} = \max\left(0,\min\left(1,\sigma_{i}'+S\right)\right)
\end{equation*}
with $S\in\R$ being chosen in such a way that
$\displaystyle\sum_{i=1}^{d}\sigma_{i}=k$.
\end{replem}
\begin{proof}
The problem of finding $M$ can be written in the form of a convex optimization
problem as:
\begin{align*}
\mbox{minimize} : &\ \ \norm{M-M'}_F^2\\
\mbox{subject to} : &\ \ 0 \preceq M \preceq I, \trace M = k.
\end{align*}
Because the objective is strongly convex, and the constraints are convex, this
problem must have a unique solution. Letting $\sigma_{1},\dots,\sigma_{d}$ and
$v_{1},\dots,v_{d}$ be the eigenvalues and associated eigenvectors of $M$, we
may write the KKT first-order optimality conditions \citep{KKT} as:
\begin{equation}
\label{eq:kkt} 0 = M-M' + \mu I - \sum_{i=1}^{d} \alpha_{i}v_{i}v_i^T +
\sum_{i=1}^d \beta_{i}v_{i}v_{i}^{T}, 
\end{equation}
where $\mu$ is the Lagrange multiplier for the constraint $\trace M = k$, and
$\alpha_i, \beta_i \ge 0$ are the Lagrange multipliers for the constraints $0
\preceq M$ and $M \preceq I$, respectively.  The complementary slackness
conditions are that $\alpha_{i}\sigma_{i} = \beta_{i}\left(\sigma_{i}-1\right)
= 0$. In addition, $M$ must be feasible.

Because every term in Equation \ref{eq:kkt} \emph{except} for $M'$ has the same
set of eigenvectors as $M$, it follows that an optimal $M$ must have the same
set of eigenvectors as $M'$, so we may take $v_i = v_i'$, and write Equation
\ref{eq:kkt} purely in terms of the eigenvalues:
\begin{equation*}
\sigma_{i} = \sigma_{i}'-\mu+\alpha_{i}-\beta_{i}. 
\end{equation*}
Complementary slackness and feasibility with respect to the constraints
$0 \preceq M \preceq I$ gives that if $0 \le \sigma_{i}'-\mu\le 1$,
then $\sigma_{i}=\sigma_{i}'-\mu$. Otherwise, $\alpha_{i}$ and $\beta_{i}$
will be chosen so as to clip $\sigma_{i}$ to the active constraint:
\begin{equation*}
\sigma_i = \max\left( 0, \min\left( 1, \sigma_{i}'-\mu \right) \right). 
\end{equation*}
Primal feasibility with respect to the constraint $\trace M = k$ gives that
$\mu$ must be chosen in such a way that $\trace M = k$, completing the proof.
\end{proof}

\end{document}